\DeclareMathAlphabet{\pazocal}{OMS}{zplm}{m}{n}
\newcommand{\vct}{\mathbf}
\newcommand{\vect}[1]{\boldsymbol{#1}}
\newtheorem{remark}{Remark}
\newtheorem{theorem}{Theorem}
\def\particulartemplate#1{
	\begin{tikzpicture}[overlay, remember picture]
	\draw let \p1 = (current page.west), \p2 = (current page.east) in
	node[minimum width=\x2-\x1, minimum height=0.1cm, rectangle, fill=yellow!35!white, anchor=north west, align=center, text width=\x2-\x1] at ($(current page.north west) + (0,-0.3)$) {\large \textbf{\texttt{#1}} };
	\end{tikzpicture}
}
\title{\LARGE \bf
Two-layer  adaptive trajectory tracking controller for quadruped robots on slippery terrains  
}
\author{ Despina-Ekaterini Argiropoulos$^{1,3}$, Dimitrios Papageorgiou$^{1,2}$, Michael Maravgakis$^{1,3}$, \\ Drosakis Drosakis$^1$ and Panos Trahanias$^{1,3}$
\thanks{The research leading to these results has received partial funding from the European Community’s HORIZON.1.2 - Marie Skłodowska-Curie Actions (MSCA) under Grant agreement No. 101072634, project RAICAM.}
\thanks{${}^1$Authors are with the Institute of Computer Science, Foundation for Research and Technology–Hellas, Heraklion 700 13, Greece.
        {\tt\small  \{dimpapag, despinar, maravgakis, drosakis, trahania\}@ics.forth.gr}}%
\thanks{${}^2$Dimitrios Papageorgiou is also with the Department of Electrical and Computer Engineering, Hellenic Mediterranean University, 714 10, Heraklion, Crete,  Greece.
      }%
\thanks{${}^3$Despina-Ekaterini Argiropoulos,  Michael Maravgakis and Panos Trahanias are also with the Department of Computer Science, University of Crete,
714 09 Heraklion, Crete, Greece.
       }%
}
\begin{document}

\maketitle
\particulartemplate{
 	This  paper  is  a  pre-print  version under review. 
}
\thispagestyle{empty}
\pagestyle{empty}

\begin{abstract}
Task space trajectory tracking for quadruped robots plays a crucial role  on  achieving dexterous maneuvers in unstructured environments. To fulfill the control objective, the robot should apply forces through the contact of the legs with the supporting surface, while maintaining its stability and controllability. In order to ensure the operation of the robot under these conditions, one has to account for the possibility of  unstable contact of the legs that arises when the robot operates on partially or globally slippery terrains. In this work, we propose an adaptive trajectory tracking controller for quadruped robots, which involves two prioritized  layers of adaptation for avoiding possible slippage of one or multiple legs. The adaptive framework is evaluated through simulations and validated through experiments.                
\end{abstract}

\section{Introduction}

One of the main advantages of legged robots 
is their capability to transverse unstructured environments, such as sewers or construction sites, which may involve a variety of challenging terrain types. This capability enables the utilization of legged robots in  applications potentially dangerous for the human, such as search and rescue operations or missions for inspection and maintenance in critical asset facilities. Except their structural complexity, such difficult-to-transverse environments also impose  dynamic challenges,  with the most dominant being the variation of the friction coefficient of the terrain. Partially or globally slippery terrains are considered to be one of the most frequent problems faced by legged robots, which may arise in case of mud, wet floor, oil or ice \cite{Jenelten2019}.  Slippage of any leg with respect to the supporting surface could trigger unknown and unmodelled dynamics  which would in-turn worsen  the trajectory tracking performance or even lead to robot's instability; e.g. it could lead to singular configurations or to configurations in which the contact with the supporting surface is lost.

Slippage is occurred when the applied force is not within  the static friction cone, defined by a threshold to the ratio between the tangential and the orthogonal force magnitudes; the so-called static friction coefficient. This coefficient, however, is difficult to model, varies in space and therefore, in most of the cases, it is considered to be unknown a priori. To tackle the problem of identifying the slippage phenomenon, current methods in literature propose the utilization of machine learning \cite{Camurri2017, Lin2021, Piperakis2022},
classical estimators such as  Extended/Unscented Kalman Filters \cite{Bloesch2012, Bloesch2013},  model-based estimators  \cite{Jemin2016}, most of the times   utilizing  proprioceptive information to yield an estimate of the probability of stable contact, as proposed in our recent work~\cite{Maravgakis2023}.    

After identifying the slippage of a single or multiple legs, one has to define a reactive behavior for ensuring the stability and controllability of the system. Common practices involve the utilization of estimators for slippage alongside with an  on-line trajectory generation mechanism for slippage recovery. For instance, in \cite{Fabian2019}, to recover from the slippage of one leg, the slipping leg is commanded to move towards its initial configuration, which is executed within a layer of a hierarchical framework that also involve the task-space trajectory. That means that the system may sacrifice the task performance for recovering from slippage, as it is not guaranteed that the joint-space recovery motion for the legs will not affect the overall tracking performance; e.g. in case of globally slippery terrains. In \cite{Focchi2018}, the slippage recovery strategy is handled also by a hierarchical framework which ensures that the reactive ground forces stay within an estimate of the friction cone. However, in case of infeasibility of solution (e.g. when all four legs slip) the motion is stopped (frozen state).  In \cite{Gerardo2018}, a Kalman Filter is utilized in a finite state machine framework to identify slippage and react. However, the paper focuses mainly on the  generalized-momentum disturbance observation, while the reactive behavior refers to the high-level control without considering the tracking of a specific trajectory.  From a machine learning perspective, some works employ the notion of reinforcement learning \cite{Joonho2020, Gwanghyeon2022} for tackling the problem of slippage during locomotion. Most specifically, in \cite{Joonho2020} reinforcement learning is employed based on proprioceptive sensorial information, while  in \cite{Gwanghyeon2022} the concurrent training of a state estimator and a reactive policy is considered; both works consider the training of the system in a simulated environment, exploring as much as possible the variety of  cases that the robot may face during its operation. 

Trajectory tracking can be utilized as the barebone for locomotion and facilitate walking over unstructured and slippery terrains. To the best of our knowledge, although some works  tackle the problem of trajectory tracking for quadruped robots in general, such as   \cite{Zihang2021} in joint space and \cite{Yulong2022} in task-space, the control problem is not explicitly  addressed considering the operation on a partially or globally slippery terrain.  On the other hand, works that make such a consideration (e.g. \cite{Fabian2019, Focchi2018, Gerardo2018, Joonho2020, Gwanghyeon2022}), are mainly tackling the locomotion problem and they cannot guarantee that the task-space control will remain unaffected, while most of them require the extensive training of the system in a simulated environment.


In this work, a novel trajectory tracking control scheme is proposed for quadruped robots, incorporating two prioritized layers of adaptation for minimizing the possible slippage of one or multiple legs. The proposed control framework builds upon our previous work on contact state estimation, in which the probability of stable contact for each foot is estimated independently based solely on proprioceptive data, namely IMU sensors attached to the robot's feet.
In the proposed control framework, the estimated probability is utilized as an input for two layers of adaptation. In particular, in the first layer, the adaptation of the distribution of the control effort among the legs is considered, without affecting the task performance, exploiting the redundancy of the quadruped robot. In the second layer, which is enabled only if the problem cannot be solved by the first layer, the time-scaling of the trajectory is considered, which affects  only the temporal properties of the task, without distorting the path followed by the robot. In other words, when the control effort distribution is not enough for tackling the problem of minimizing the slippage, the temporal scaling  of the trajectory  (slowing down the motion) will lead to an overall reduction of the control effort that has to be applied by the robot's legs. 

The main contributions of the proposed method are:
\begin{itemize}
\item A task-space trajectory tracking controller is proposed, which is theoretically proven to yield an asymptotically stable behavior. 
\item The proposed control scheme involves two layers of adaptation, as it accounts for partially or globally slippery terrains. Simulations show the ability of the proposed scheme to maintain stability and controllability of the system. 
\item The first layer of adaptation exploits the redundancy of the quadruped robot and dynamically distributes the control effort to the legs, based on the estimated slippage probability of each leg, while the second layer of adaptation involves the time-scaling of the trajectory and it is enabled only if the dynamic distribution cannot solve the problem of avoiding the slippage.  
\end{itemize}
Finally, to facilitate and promote research, the control scheme has been released as an open-source module in ROS/C++ coined as \textit{Maestro}~\cite{maestro} while the probabilistic contact estimator can be found in~\cite{PCE}. 


\section{Problem formulation and concept solution}

Consider the quadruped robot depicted in Fig.\ref{fig:go1}, having $n\in\mathbb{N}$ joints in each leg and let $q_{i,j}\in\mathbb{R}, i=1,...,4, j=1,...,n$ be the joint position variables of the $i$-th leg. Let $\vct{q}\triangleq[q_{1,1} \; q_{1,2} \; ... \; q_{4,n-1} \; q_{4,n}]^\intercal\in\mathbb{R}^{4n}$ be the vector of the total joint variables of the robot. Furthermore, let  $\{C\}$ be the frame placed at the Center of Mass (CoM) of the robot (as depicted in Fig.\ref{fig:go1}) and  ${}^c\vct{p}_{i}(q_{i,1},...,q_{i,n})\in\mathbb{R}^3$ be the position of the tip of each leg with respect to  $\{C\}$. The  position and orientation  of  $\{C\}$ with respect to the world frame $\{0\}$ is denoted by $\vct{p}_c\in\mathbb{R}^3$ and $\vct{R}_c\in SO(3)$ respectively. World frame $\{0\}$ could refer to a known inertial frame, or (in most of the cases) to the initial pose of the robot (i.e. $\{C\}$ at $t=0$). 
When all four tips are in contact with the supporting surface the mapping between the forces  $\vct{f}_i\in\mathbb{R}^3, i=1,...,4$  applied to the tips of the legs (e.g. reactive forces from the supporting surface) and the corresponding generalized force  $\vct{F}_c\triangleq[\vct{f}_c^\intercal \; \vect{\tau}_c^\intercal]^\intercal\in\mathbb{R}^6$ at the CoM, with $\vct{f}_c\in\mathbb{R}^3$ and  $\vect{\tau}_c\in\mathbb{R}^3$ being the force and torque at the CoM respectively, is the following:

\begin{equation}\label{eq:F_c}
\vct{F}_c = \vct{G}(\vct{q})\vct{F}_a,
\end{equation}
where 
\begin{equation} \label{eq:G}
\vct{G}(\vct{q})\triangleq 
    \begin{bmatrix}
\vct{I}_3 & \vct{I}_3 & \vct{I}_3 & \vct{I}_3\\
\vct{S}(\vct{p}_{c1}) & \vct{S}(\vct{p}_{c2}) & \vct{S}(\vct{p}_{c3}) & \vct{S}(\vct{p}_{c4})
\end{bmatrix}
\end{equation}
 and
\begin{equation}
\vct{F}_a\triangleq 
    \begin{bmatrix}
\vct{f}_1\\ \vct{f}_2\\ \vct{f}_3\\ \vct{f}_4    
\end{bmatrix}\in\mathbb{R}^{12},
\end{equation}
with $\vct{p}_{ci}(q_{i,1},...,q_{i,n})\triangleq\vct{R}_c{}^c\vct{p}_i(q_{i,1},...,q_{i,n}), i=1,...,4$, $\vct{I}_3\in\mathbb{R}^{3\times3}$  the identity matrix and $\vct{S}(.):\mathbb{R}^3\rightarrow\mathbb{R}^{3\times3}$ the skew symmetric mapping. Notice that $\vct{G}(\vct{q})$ belongs to $\mathbb{R}^{6\times12}$, and therefore 
the problem of solving \eqref{eq:F_c} with respect to $\vct{F}_a$, i.e. finding $\vct{F}_a$ for a given $\vct{F}_c$, is redundant.  
The mapping between the force applied to the tip of the $i$-th leg and the corresponding torques at the joints of the leg, is given by:
\begin{equation}\label{eq:joint_jacobian}
\vect{\tau}_i = \left(\vct{R}_c {}^c\vct{J}_{i}(q_{i,1},...,q_{i,n})\right)^\intercal \vct{f}_i,
\end{equation}
where ${}^c\vct{J}_{i}(q_{i,1},...,q_{i,n})\in\mathbb{R}^{3\times n}$ is the position part of the Jacobian of the leg with respect to the CoM and $\vect{\tau}_i\in\mathbb{R}^n$ the torques at the joints of the leg. 

 \begin{figure}[h!]
	\centering
		\includegraphics[width=.45\textwidth]{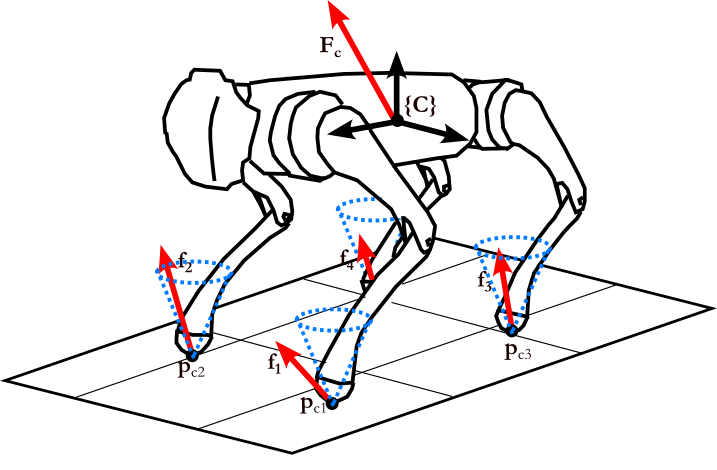}
	    \caption{Force distribution among the legs of the quadruped robot.}
	    \label{fig:go1}
\end{figure}

\begin{remark}
When less than four tips are in contact with the environment, $\vct{G}(\vct{q})$ has to be modified accordingly to involve only the legs that are in contact with the supporting surface. A representative example is in case of locomotion, in which one or more legs are in swing phase. 
\end{remark}

The dynamic model of the system, assuming that the inertia of the legs is negligible as compared to the inertia of the rest of the body, is given by:
\begin{equation}\label{eq:dyn_model}
\vct{H}_c\dot{\vct{V}}_c +\vct{C}_c\vct{V}_c + \vct{g}_c = \vct{F}_c, 
\end{equation}
where $\vct{V}_c\triangleq[\dot{\vct{p}}_c^\intercal \; \vect{\omega}_c^\intercal]^\intercal\in\mathbb{R}^6$ is the generalized velocity of  $\{C\}$, $\vect{\omega}_c\in\mathbb{R}^3$ being its angular velocity, $\vct{H}_c\triangleq\text{diag}(m\vct{I}_3, \mathcal{I}_c)\in\mathbb{R}^{6\times6}$ is the positive definite inertia matrix of the robot, $\vct{g}_c\in\mathbb{R}^6$ the gravity vector and $\vct{C}_c\triangleq\text{diag}(\vct{0}_{3\times3},\vct{S}(\mathcal{I}_c\vect{\omega}_c))\in\mathbb{R}^{6\times6}$ the Coriolis-centrifugal matrix, with $\mathcal{I}_c\triangleq \vct{R}_c\mathcal{I}\vct{R}_c^\intercal\in\mathbb{R}^{3\times3}$ and $\mathcal{I}\in\mathbb{R}^{3\times3}$  the inertia tensor of the main body of the robot. In case in which the $z$-axis of the inertial frame is aligned to the gravity direction, $\vct{g}_c=[ 0\; 0\; m_r g\; 0\; 0\; 0]^\intercal$, with $m_r \in\mathbb{R}^+$ being the mass of the robot and $g\in\mathbb{R}^+$ the constant acceleration due to gravity. 


Consider a torque controlled robot, i.e. accepting joint torque commands $\vect{\tau}_i(t)\in\mathbb{R}^n, i=1,...,4$ (joint torques of the $i$-th leg). For solving the task-space trajectory tracking problem, 
one has to solve \eqref{eq:F_c} with respect to $\vct{F}_a$, i.e. compute the inverse mapping, to calculate the forces that each leg should apply in order to render the commanded force in the task-space,  $\vct{F}_c$, as follows:
\begin{equation}\label{eq:inverse}
\vct{F}_a = \vct{G}^\dagger(\vct{q})\vct{F}_c,
\end{equation}
where $\vct{G}^\dagger\in\mathbb{R}^{12\times6}$ is a right pseudo-inverse of $\vct{G}$. In this point, there are multiple options regarding the pseudo-inverse. Some of them are the right Moore-Penrose pseudo-inverse,  given by $\vct{G}^\dagger\triangleq\vct{G}^\intercal(\vct{G}\vct{G}^\intercal)^{-1}$, which will result in an equal distribution of control effort among the four legs (minimum norm solution), or the right weighted  pseudo-inverse, given by:
\begin{equation}\label{eq:weighted}
\vct{G}^\dagger\triangleq\vct{W}^{-1}\vct{G}^\intercal(\vct{G}\vct{W}^{-1}\vct{G}^\intercal)^{-1},
\end{equation}
with $\vct{W}\in\mathbb{R}^{12\times12}$ being a positive definite weight matrix. The latter will result in distributing the control effort based on the selected weight matrix $\vct{W}$. More specifically, in this case, by selecting a positive definite diagonal matrix $\vct{W}\triangleq\text{diag}(w_{1,1},w_{1,2},w_{1,3},...,w_{4,3})$, the higher the $w_{i,m}$, the less the force appended to the $m$-th direction of the $i$-th leg's tip; for instance a high  value of $w_{3,2}$ as compared to the other $w_{i,m}$-s, will result in appending less force along the  $y$-direction ($m=2$) of the third leg ($i=3$). After computing the corresponding force  in each leg, i.e. $\vct{f}_i$ which is included in $\vct{F}_a$, one can compute the commanded torques from \eqref{eq:joint_jacobian}.

\subsection{The problem of slippage}

From a robot control perspective, slippage of the tip with respect to the supporting surface could potentially result in losing the controllability of the  system. In particular, slippage could lead a) to singular configurations, in which the rank of ${}^c\vct{J}_{i}(q_{i,1},...,q_{i,n})$ will be decreased, b) reaching the joint limits or c) losing contact in one or multiple legs without accounting for it, which is required for  the validity of \eqref{eq:inverse}.  
As long as the force  applied by each leg, i.e. $\vct{f}_i$, is within the friction cone, no slippage of the tip (with respect to the supporting surface) is occurred. More specifically, for a given terrain with constant static friction coefficient $\mu\in\mathbb{R}^+$, the static friction cone (the area in which there is not slippage) is expressed by:
\begin{equation}\label{eq:friction_cone}
\mathcal{C}\triangleq\{\vct{f}_i\in\mathbb{R}^3:\mu|\vct{n}^\intercal\vct{f}_i|>\|(\vct{I}_3 - \vct{n}\vct{n}^\intercal)\vct{f}_i\|\},
\end{equation}
where $\vct{n}\in\mathbb{S}^2$ (with $\mathbb{S}^2\triangleq\{\vct{x}\in\mathbb{R}^3:\|\vct{x}\|=1\}$) is the   normal to the supporting surface vector. 
However, due to the fact that $\mu$ is not easy to measure or estimate  and most of the times it is considered to be unknown, one cannot assess a priori, i.e. before commanding $\vct{f}_i$ to the leg, whether the leg's tip would slip or not.

\subsection{Concept solution} 

Building on our previous work~\cite{Maravgakis2023}, in which we introduced a probabilistic approach for estimating the stable contact probability (i.e. not slipping), we  propose a novel trajectory tracking control scheme incorporating a two-layer on-line adaptation. In particular, in~\cite{Maravgakis2023} the probability of a contact being stable is estimated in real-time, based on a set of IMU sensors mounted on the robot's feet.
This probability is  counter proportional to the probability of slippage. Based on this estimate, we firstly propose an  adaptation law for the weights of distribution of the control effort among all the directions of forces that should be applied by each leg of the robot. The rationale behind the adaptation law is to  append less tangential to the surface forces to the legs for which the slippage probability is high, attracting in this way the appended force towards the friction cone.  
Furthermore,  when the aforementioned force distribution cannot guarantee the elimination of slippage, we propose the dynamic time-scaling of the trajectory (e.g. to slow down the motion), which will consequently yield a reduced control effort magnitude in general.    
The complete adaptive control scheme is shown in Fig.\ref{fig:block}. 

 \begin{figure}[h!]
	\centering
		\includegraphics[width=.45\textwidth]{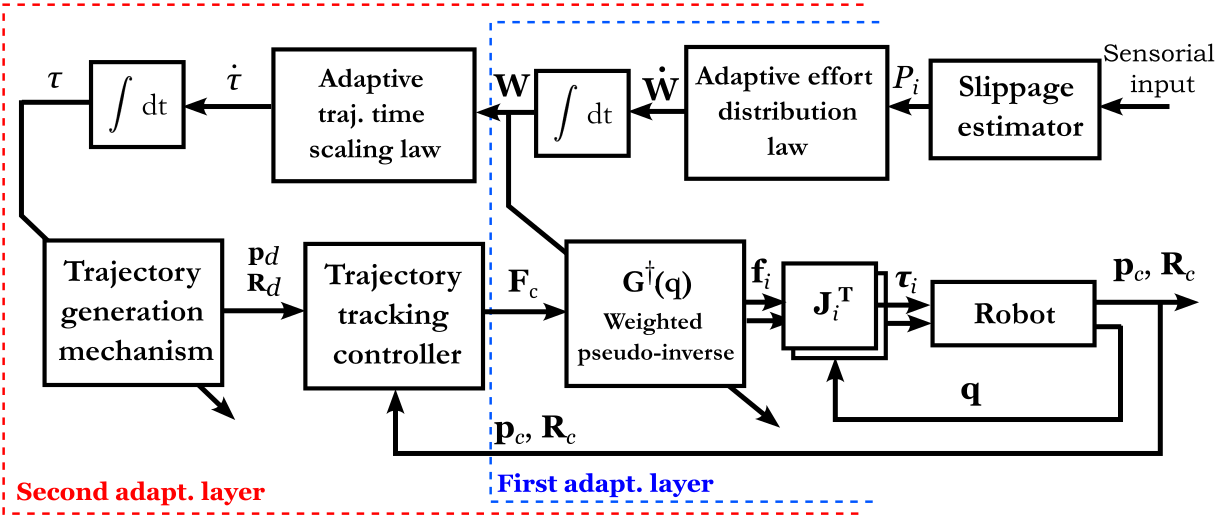}
	    \caption{Block diagram of the proposed adaptive scheme.}
	    \label{fig:block}
\end{figure}

\section{Proposed scheme}

Given a reference trajectory $\vct{p}_d(t)\in\mathbb{R}^3$ an $\vct{R}_d(t)\in SO(3)$ for  frame $\{C\}$ in position and orientation respectively, we consider the trajectory tracking problem, i.e. the problem of minimizing the Euclidean norms of the following errors in time:
\begin{align}
\vct{e}_p&\triangleq\vct{p}_c(t)-\vct{p}_d(t),\\
\vct{e}_o&\triangleq\text{log}\left(\vct{R}_c(t)\vct{R}_d^\intercal(t)\right),
\end{align}
where $\text{log}(\vct{R})\triangleq\vct{k}\theta\in\mathbb{R}^3$ the logarithmic mapping, with $\theta\in[0,\pi]$ being the angle and $\vct{k}\in\mathbb{S}^2$ being the axis of  rotation of a given $\vct{R}$.  The trajectory $\vct{p}_d(t)$, $\vct{R}_d(t)$ could represent the motion of the main robot's body during its locomotion, or even dexterous motions for avoiding collisions in unstructured environments, e.g. the case of passing through a narrow opening. Notice that in case of locomotion one should re-define the matrix $\vct{G}$ on-line, based on the feet that are in contact with the terrain.  

Considering the system dynamics given in \eqref{eq:dyn_model}, the control objective can be achieved by  applying the following state-feedback control law with gravity compensation, representing the commanded generalized force that should be applied to the CoM:
\begin{equation}\label{eq:Command}
\begin{split}
\vct{F}_c \triangleq& 
\vct{H}_c
\begin{bmatrix}
\ddot{\vct{p}}_d \\
\frac{d}{dt}\left( \vct{R}_c\vct{R}_d^\intercal\vect{\omega}_d\right)
\end{bmatrix}
+ 
\vct{C}_c
\begin{bmatrix}
\dot{\vct{p}}_d \\
\vct{R}_c\vct{R}_d^\intercal\vect{\omega}_d
\end{bmatrix} \\
&-
\begin{bmatrix}
k_p \vct{e}_p \\
k_o \vct{e}_o
\end{bmatrix} - \vct{K}_v \vct{e}_v + \vct{g}_c,
\end{split}
\end{equation}
where 
\begin{equation}\label{eq:V_e}
\vct{e}_v\triangleq
\vct{V}_c - 
\begin{bmatrix}
\dot{\vct{p}}_d \\
\vct{R}_c\vct{R}_d^\intercal\vect{\omega}_d
\end{bmatrix},
\end{equation}
$k_p, k_o\in\mathbb{R}^+$, $\vct{K}_v\in\mathbb{R}^{6\times6}$ are constant positive control gains and $\vect{\omega}_d\in\mathbb{R}^3$ is the reference angular velocity which can be calculated by $\vct{S}(\vect{\omega}_d)=\dot{\vct{R}}_d\vct{R}_d^\intercal$. The proof of global asymptotic stability of the origin of the state-space (corresponding to zero error in position and velocity), under the application of the control law \eqref{eq:Command}, is proven in Appendix A.

\subsection{Slippage detection} 
For the slippage detection mechanism, stable contact is considered to be the state in which the robot's foot is in touch with the  ground whilst there is no relative motion between them. To estimate the stable contact probability, a 6D IMU sensor is mounted on each foot of the robot. By exploiting the uncertainty of the inertial measurements, we employ Kernel Density Estimation (KDE) to approximate the Probability Density Function for each axis of the IMU and consequently the per axis stable probability over a small interval, as dictated in \cite{Maravgakis2023}. The method, practically estimates the probability that the inertial measurements are close to zero and finally, since they are independent, the total stable probability is  acquired  via multiplication. Finally, in order to detect when the foot touches the ground, one can utilize force measurements, haptic or dedicated contact sensors. To this end, in this work we exploited the vertical force measurement $(f_{i,z} > 0)$.

\subsection{First layer of adaptation: Adaptive effort distribution}
Based on the above slippage detection mechanism, we propose the following adaptive law for the weights $w_{i,m}, m=1,2$ (the $x-y$ coefficients) of the tangential force directions of the $i$-th leg:
\begin{equation}\label{eq:adapt_law}
\begin{split}
    \dot{w}_{i,1} = \dot{w}_{i,2} \triangleq \alpha P_i, \\
    {w}_{i,1}(0)={w}_{i,2}(0)\triangleq w_0 
    \end{split}
\end{equation}
where $\alpha\in\mathbb{R}^+$ is a tunable constant adaptation gain, $w_0\in\mathbb{R}^+$ the initial value of the weights in $x-y$ direction  and $P_i\in[0,1]$ the probability of slippage of the $i$-th leg. Notice that the normalization of $w_{i,m}, i=1,...,4, m=1,2,3$ is not required, as \eqref{eq:inverse}, \eqref{eq:weighted} do not assume constraints for the values of $\vct{W}$.

\begin{remark}
Notice that \eqref{eq:adapt_law} assumes  the orthogonality between the supporting surface and the gravity direction, for the sake of simplicity of   presentation. However, the generalization to inclined surfaces can be easily done by considering a non-diagonal $\vct{W}$ matrix.   
\end{remark}

\begin{remark}
Given \eqref{eq:adapt_law}, the weights will increase only as long as slippage is estimated, which means that the weights will eventually reach  the value in which the control effort appended to the specific leg does not yield any slippage. The increase of these weights (i.e. the weights corresponding only to the tangential forces) will result in decreasing the magnitude of forces appended towards these directions. Therefore, the appended force $\vct{f}_i$ will converge to the friction cone $\mathcal{C}$, as graphically depicted in Fig.\ref{fig:cone_convergence}.
\end{remark}

 \begin{figure}[h!]
	\centering
		\includegraphics[width=.25\textwidth]{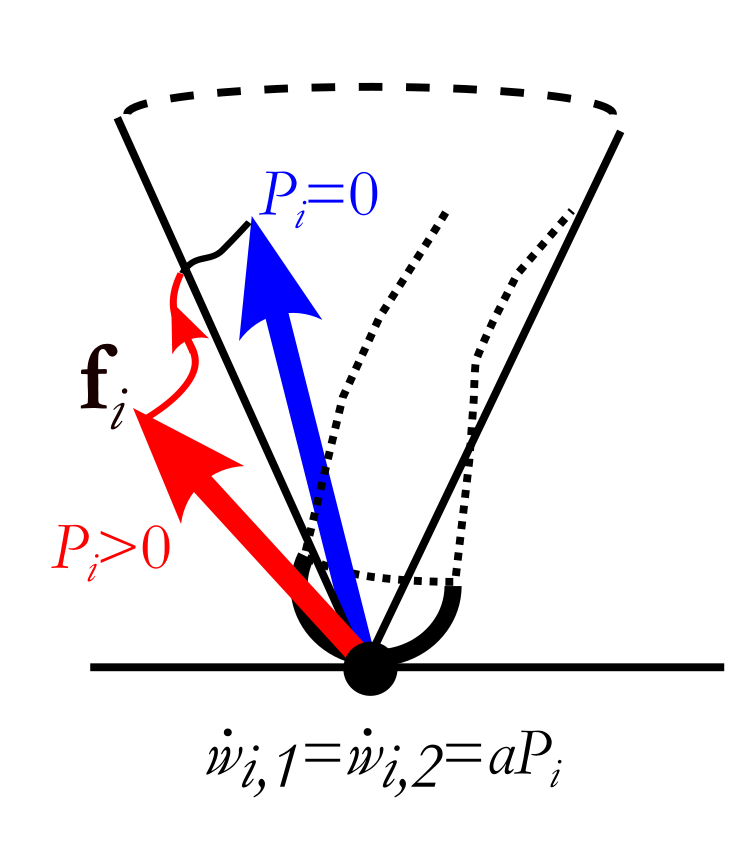}
	    \caption{Convergence of the control effort of each leg towards the friction cone.}
	    \label{fig:cone_convergence}
\end{figure}

\subsection{Second layer of adaptation: Trajectory time-scaling}

The first layer of adaptation could fail 
when all the legs of the robot are contacting a terrain with a relatively low static friction coefficient.
Hence, to handle this type of occasions, we propose the time-scaling of the trajectory, i.e. to sacrifice the temporal accuracy of the task for guaranteeing stability and controllability, maintaining however accuracy with respect to the spatial properties of the path.  

Let $\vct{p}_d(t_v), \vct{R}_d(t_v)$ be the time-parametric trajectory, with $t_v(t)\in\mathbb{R}^+$ being the scaled time parameter and $\beta\in[0,1]$ the time-scaling coefficient. Hence, the evolution of the scaled time parameter is characterized by $\dot{t}_v(t)=\beta(t)$. For instance, setting a constant $\beta=1$ would result in $t_v= t$ and consequently would lead to the execution of the trajectory on a nominal speed, while setting $\beta<1$ would slow down the motion. 
To tackle the problem of global slippage, we propose the utilization of the following time-scaling coefficient:
\begin{equation}\label{eq:time_scaling}
    \beta(t) \triangleq \frac{w_0}{\text{min}(w_{1,1},w_{2,1},w_{3,1},w_{4,1})}.
\end{equation}
The rationale behind \eqref{eq:time_scaling} is to reduce the speed (reflected by $\beta$) 
when slippage has occurred in all four legs, an occasion which is signified by the increase of the weights of all four legs due to \eqref{eq:adapt_law}.
For instance, if at least one of the legs does not face any slippage, then $\text{min}(w_{1,1}(t),w_{2,1}(t),w_{3,1}(t),w_{4,1}(t))$ will be equal to $w_0$ and therefore $\beta$ will be 1, which means that no time scaling would occur.

As the on-line  time scaling is considered, the trajectory should be generated on-line from $t_v$, which is calculated by the integration of  $\dot{t}_v = \beta(t)$ in real-time. Notice that, in such a case $\dot{\vct{p}}_d(t) = \beta(t)\frac{d\vct{p}_d(t_v)}{dt_v}, \dot{\vct{R}}_d(t) =\beta(t)\frac{d\vct{R}_d(t_v)}{dt_v}$.

The complete algorithm of the proposed control scheme is given in Algorithm \ref{algorithm:contr_loop}.

\begin{remark}
    Notice that if the trajectory is generated online by a  dynamical system (e.g. a Dynamic Movement Primitives model \cite{Ijspeert2013}), the application of the aforementioned idea is straight forward, as in that case $\beta$ would correspond to the time scaling parameter of the dynamical system.   
\end{remark}

\begin{algorithm}
	\caption{Implementation of the control loop} \label{algorithm:contr_loop}
	\begin{algorithmic}[1]
		\State Select values for:  $k_p, k_o, \vct{K}_v, w_0$
        \State $\vct{W}:=w_0\vct{I}_{12}, \beta:=1, t_v:=0$ \Comment Initialization
		\While{control is enabled }
			\State Get current state of the robot $\vct{p}_{c}, \vct{R}_{c}, \dot{\vct{p}}_{c}, \vect{\omega}_c$
            \State Estimate  $P_i, \forall i=1,...,4$ \Comment Slip. prob. estimator 
            \State Compute $\dot{w}_{i,1}, \dot{w}_{i,2}, \forall i=1,...,4 $ from \eqref{eq:adapt_law}
            \State Integrate $\dot{w}_{i,1}, \dot{w}_{i,2}, \forall i=1,...,4 $ to update $\vct{W}$
            \State Compute $\beta$ from \eqref{eq:time_scaling}
            \State Integrate $\dot{t}_v=\beta$ to update $t_v$
            \State Compute $\vct{p}_d(t_v), \dot{\vct{p}}_d(t_v), \ddot{\vct{p}}_d(t_v), \vct{R}_d(t_v), \vect{\omega}_d(t_v), \dot{\vect{\omega}}_d(t_v)$
			\State Compute $\vct{F}_c$ from \eqref{eq:Command}
            \State Compute $\vct{F}_a$ (includes the $\vct{f}_i$-s) from \eqref{eq:inverse}, \eqref{eq:weighted}
            \State Compute $\vect{\tau}_i, \forall i=1,...,4$ from \eqref{eq:joint_jacobian}
            \State Command $\vect{\tau}_i, \forall i=1,...,4$ to the joints
        \EndWhile
	\end{algorithmic}
\end{algorithm}

\section{Simulation study}

To assess the performance of the proposed adaptive control scheme we consider three simulation scenarios: a) A simple point-to-point motion to evaluate the trajectory tracking performance, b) a scenario involving the tracking of a periodic motion with the rear right foot  contacting a slippery surface and c) a scenario involving the tracking of a periodic motion with global slippage, i.e. all four legs are contacting a slippery surface. For the simulations, the model of a Unitree Go1 robot is utilized in the Gazebo environment and a control cycle of 2ms is considered. The parameters utilized are $k_p=3000, k_o=150, \vct{K}_v=\text{diag}(550\vct{I}_3,55\vct{I}_3), w_0=35, \alpha=150$. In Fig.\ref{fig:sim_screeshot} the simulation environment is shown, with the yellow area representing the slippery terrain considered in the second scenario (slippage of the rear right foot). 

\begin{figure}[h!]
	\centering
		\includegraphics[width=.3\textwidth]{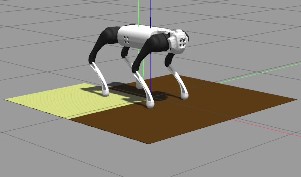}
	    \caption{The initial configuration of the simulations. Yellow area: The slippery area considered for the second scenario.}
	    \label{fig:sim_screeshot}
\end{figure}

\subsection{Scenario 1: Point to point motion}
For this scenario, a terrain with a static friction coefficient of $1.4$ is considered, representing a non-slippery terrain. The desired trajectory is generated online by the following first order dynamical system: $\dot{\vct{p}}_d(t) = \vct{p}_d(t)-\vct{p}_T$, with $\vct{p}_T=\vct{p}_d(0)+[0.1 \; 0.05 \; -0,005]^\intercal$ being the constant target. The initial actual and  desired values are $\vct{p}(0)=[ -0.043\; -0.0037 \; 0.356]^\intercal$m and  $\vct{p}_d(0)=[-0.023 \; 0.0063 \; 0.355]^\intercal$m in order to impose an initial position error of $\vct{e}_p=[-2 \;\; -1 \;\; 0.1]^\intercal$cm. In Fig.\ref{fig:1stscenario_traj} the actual position evolution is compared to the desired trajectory, in which one can notice the tracking performance. Notice that the tracking performance is affected by the unmodelled joint friction that acts as a disturbance to the system, with the $z$-direction being the most disturbed direction, due to the manipulability ellipsoid of the given robot's configuration. One could possibly reduce  this steady state error by further tuning the control gains (as no extensive tuning was performed), or by incorporating an additional integral term to the controller.       

 \begin{figure}[h!]
	\centering
		\includegraphics[width=.4\textwidth]{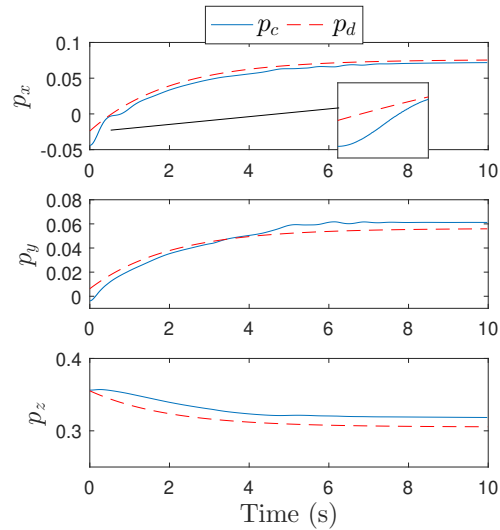}
	    \caption{[Scenario 1: Point-to-point motion] Time evolution of the actual and desired position.}
	    \label{fig:1stscenario_traj}
\end{figure}

\subsection{Scenario 2: One-foot slippage}

For the second scenario,   the rear right foot of the robot ($i=3$) is considered to contact a slippery surface having a static friction coefficient of $0.4$, which is considered to be unknown for the controller. For the rest of the feet a non-slippery surface is considered.  For comparison,  two tests are performed, namely one with the adaptive mechanism  and the other without it. The desired trajectory involves a periodic sinusoidal motion, executing an ellipse on the $x-z$ plane, for position and a periodic rotation around the $x$-axis for orientation. The frequency of the periodic trajectory is $0.7$Hz  and $0.2$Hz in position and orientation respectively. The  weights of distribution along the $x$ direction of each leg (which is equal to the ones along the $y$ direction), i.e. $w_{i,1}$, are depicted in Fig.\ref{fig:2ndscenario_w}, alongside with the slippage probability provided by the estimator, i.e. $1-P_i$. Notice the rise of the  value of $w_{3,1}$ (the leg that slips), which results in appending less force along the  $x-y$ directions of the third leg. Further notice that the third leg stops slipping after the adaptation which means that the force converged to a value within the friction cone and the system reaches a stable steady-state condition. In Fig.\ref{fig:2ndscenario_errors}, the position and orientation errors are depicted, with and without the proposed adaptive scheme for comparison purposes. Notice that without the proposed adaptation mechanism, the system is not able to maintain its stability, as the robot loses contact with the environment at $t\approx4.5$s. Last, notice that $\beta=1$ during the whole simulation, due to equation \eqref{eq:time_scaling} and the  fact that $w_{i,1}=35,\forall i=1,2,4$, which means that the  first adaptation layer can sufficiently provide a solution by dynamically distributing the control effort.

 \begin{figure}[h!]
	\centering
		\includegraphics[width=.45\textwidth]{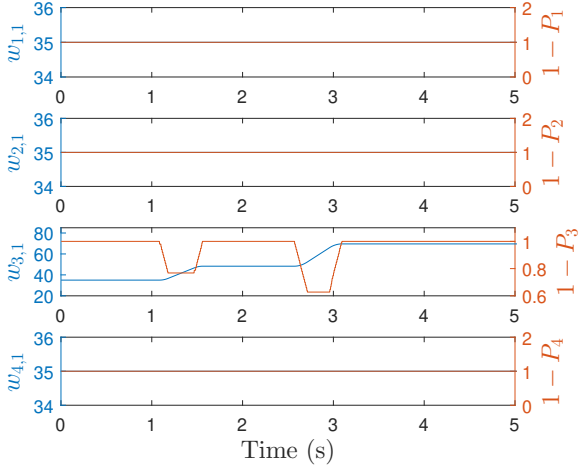}
	    \caption{[Scenario 2: One-foot slippage] Weight adaptation due to the first layer (the second layer is not enabled).}
	    \label{fig:2ndscenario_w}
\end{figure}

 \begin{figure}[h!]
	\centering
		\includegraphics[width=.41\textwidth]{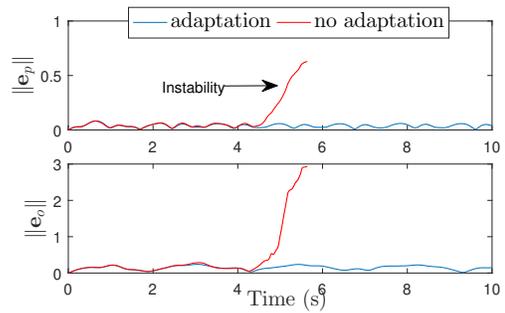}
	    \caption{[Scenario 2: One-foot slippage] Position and orientation error norms with and without adaptation.}
	    \label{fig:2ndscenario_errors}
\end{figure}

\subsection{Scenario 3: Global slippage}

For this scenario,  all four  legs of the robot are considered to contact the slippery surface having a static friction coefficient of $0.4$. For comparison, we performed two tests, namely one with the adaptive control scheme and one without it and the same trajectory with that of the second scenario is considered. The  weights of distribution along the $x$ direction of each leg (which is equal to the ones along the $y$ direction), i.e. $w_{i,1}$, as well as  the time-scaling parameter $\beta(t)$ are depicted in Fig.\ref{fig:3ndscenario_w}, alongside with the slippage probability provided from the estimator, i.e. $1-P_i$. Notice the rise of the values of all $w_{i,1}$, $i=1,...,4$, which results in slowing down the motion, which is reflected by the reduction of $\beta(t)$ which converges to the value of $\beta \approx 0.77$ after $t\approx5$s. In Fig.\ref{fig:3ndscenario_traj}, the evolution of the position of the CoM in time is depicted both with and without the proposed control scheme.  
Notice that without the proposed adaptation mechanism, the system is, also in this case, not able to maintain the stability of the system as the robot, signified by the drop of the CoM in Fig.\ref{fig:3ndscenario_traj}. Last, notice the smooth on-line time-scaling of the trajectory occurred after $t\approx5$s.

 \begin{figure}[h!]
	\centering
		\includegraphics[width=.45\textwidth]{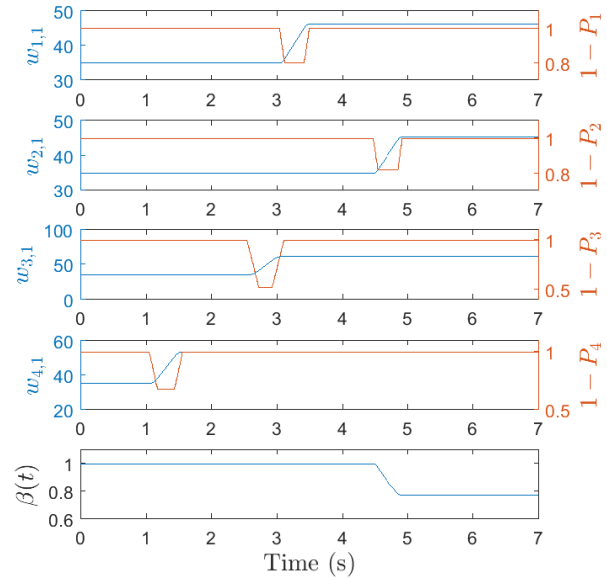}
	    \caption{[Scenario 3: Global slippage] Weight adaptation due to the first and second layer.}
	    \label{fig:3ndscenario_w}
\end{figure}

 \begin{figure}[h!]
	\centering
		\includegraphics[width=.45\textwidth]{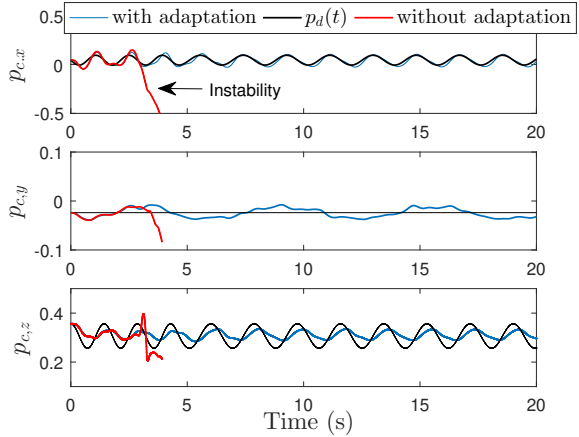}
	    \caption{[Scenario 3: Global slippage] Evolution of the position of the CoM in time, with and without adaptation.}
	    \label{fig:3ndscenario_traj}
\end{figure}

\section{Experimental validation}

The implementation of the is done utilizing  a real Unitree Go1 robot, to validate the adaptation performed by the first layer of the adaptation mechanism. In particular, a 6DOF IMU is attached to the second leg of the robot, as shown in Fig.\ref{fig:exp_screenshot}, which is in contact with a slippery surface, (i.e. lubricant is utilized to emulate the slippery area below the second leg), while the pose of the robot is found on-line via an external camera with an off-the-shelf visual odometry system\footnote{https://github.com/IntelRealSense/librealsense/blob/master/doc/t265.md} and therefore initial robot's pose is considered as the world frame for the experiment. 
The proposed adaptive scheme parameters are set to   $k_p=2400, k_o=, \vct{K}_v=\text{diag}(280\vct{I}_3,28\vct{I}_3), w_0=35, \alpha=1000$. The robot was commanded to move along the $x$-axis with a similar to the second simulation periodic trajectory for the axis of motion, having a frequency of $0.4$Hz. Fig.\ref{fig:exp_screenshot} depicts the experimental setup with the robot being in the initial configuration, while in Fig. \ref{fig:experiment_w}, the weight corresponding to the $x-y$ directions of the second leg is given, alongside with the slippage probability estimate; the weights of the rest of the legs remained  unaltered during the experiment. In Fig.\ref{fig:exp_evo_pos} the  evolution of position in time is depicted utilizing the adaptive scheme and without its utilization, for comparison. Notice that the activation of the first adaptation layer  results in maintaining stability, while when executing the same scenario without enabling the adaptation mechanism the robot is not able to maintain stability at $t\approx14.6$s. Further, notice that without the adaptation mechanism the tracking performance is affected by the slippage of the second leg, as it triggers unmodelled dynamics.

 \begin{figure}[h!]
	\centering
		\includegraphics[width=.35\textwidth]{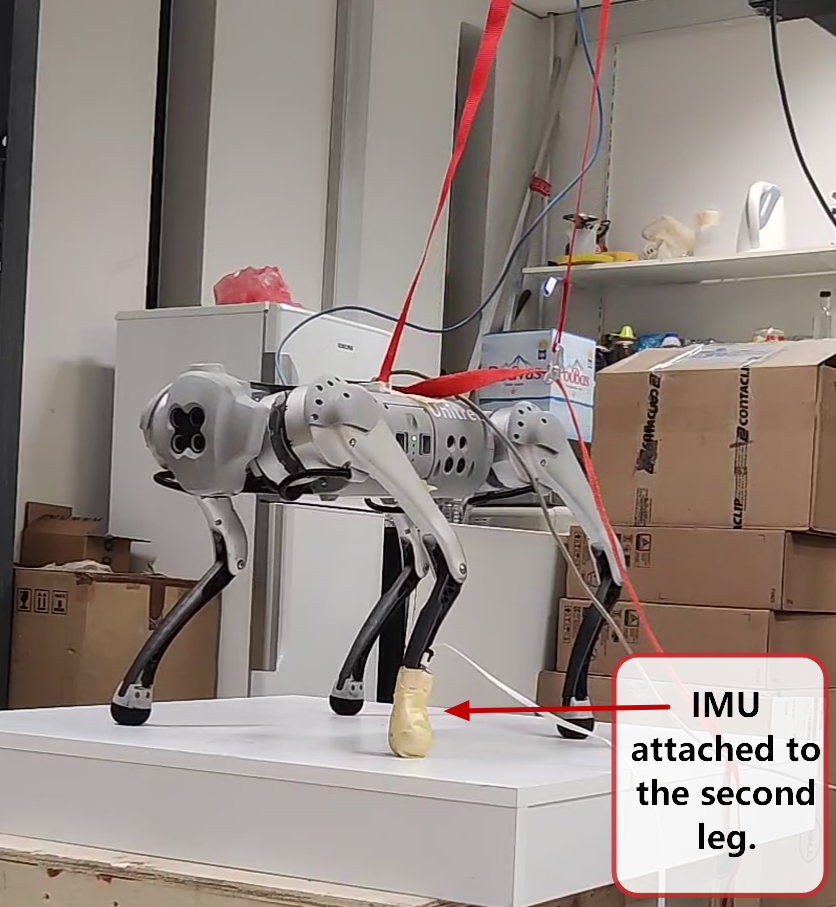}
	    \caption{The experimental setup and initial configuration of the robot.}
	    \label{fig:exp_screenshot}
\end{figure}

 \begin{figure}[h!]
	\centering
		\includegraphics[width=.45\textwidth]{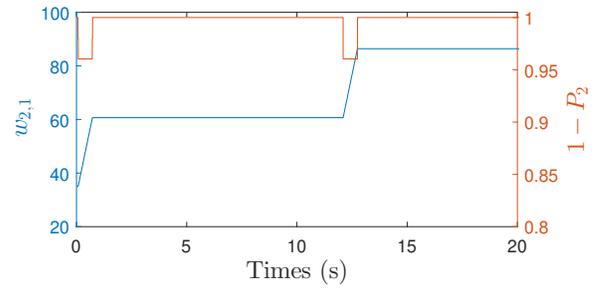}
	    \caption{The second weight during the experimental validation.}
	    \label{fig:experiment_w}
\end{figure}

 \begin{figure}[h!]
	\centering
		\includegraphics[width=.45\textwidth]{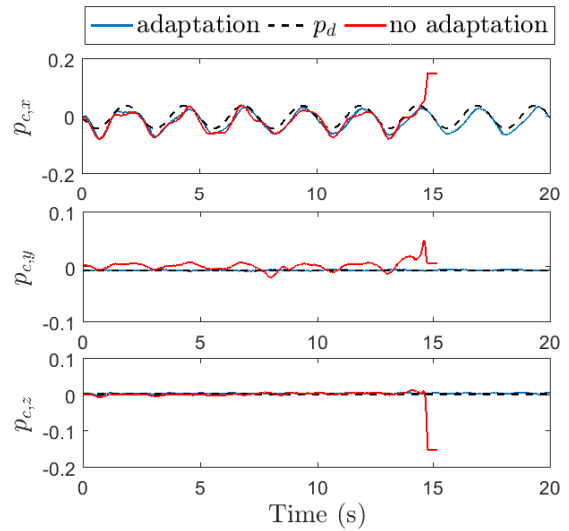}
	    \caption{ Evolution of the position of the CoM in time, with and without adaptation.}
	    \label{fig:exp_evo_pos}
\end{figure}

\section{Conclusions and future work}
 In this work, an adaptive trajectory tracking controller is proposed for quadruped robots, which involves two prioritized  layers of adaptation for minimizing the slippage of one or multiple legs. The first adaptation layer considers the dynamic distribution of the control effort among the legs, given the slippage probability for each leg. The second layer, which is enabled only if the problem cannot be solved by the dynamic distribution of the effort, which may occur when all for legs slip, acts on the time-scaling of the trajectory by dynamically and smoothly slowing down the motion, without affecting the spatial properties of the task. The proposed method is proven to be asymptotically stable. Furthermore, it is shown through simulations and experiments  that the method equips the system with robustness, as it is able to minimize the slippage of the legs and it ensures the stability and controllability of the robot. 
 This is accomplished without sacrificing the task space trajectory, this feature is particularly useful in both visual and depth based Simultaneous Localization and Mapping applications where stable and precise movement is essential for the performance of the approach.  
 Future work will mainly focus on employing the proposed adaptation control scheme coupled with our contact estimation approach as the foundation for the development of a model-based dexterous and slippery-robust dynamic locomotion algorithm. By capitalizing on the real-time adjustment of the weights for each leg we aim at developing algorithms for walking and running in terrain agnostic environments while maintaining the task space desired trajectory.

\section*{Appendix A}
After substituting \eqref{eq:Command} in \eqref{eq:dyn_model}, we get the following closed loop system dynamics:

\begin{align}\label{eq:dyn_model_proof}
\vct{H}_c\dot{\vct{e}}_v 
&= 
-(\vct{C}_c+\vct{K}_v)\vct{e}_v -
\begin{bmatrix}
k_p \vct{e}_p \\
k_o \vct{e}_o
\end{bmatrix}, \\ \label{eq:dyn_model_proof_2}
\begin{bmatrix}
\dot{\vct{e}}_p  \\
\dot{\vct{e}}_o 
\end{bmatrix}
&=
\begin{bmatrix}
 \vct{I}_3 & \vct{0}_3\\
\vct{0}_3 & \vct{J}_l(\vct{e}_o) 
\end{bmatrix}
\vct{e}_v, 
\end{align}
where $\vct{J}_l(\vct{e}_o)\in\mathbb{R}^{3\times3}$ the matrix mapping the orientation part of $\vct{e}_v$ to $\dot{\vct{e}}_o$, as detailed in \cite{Koutras2021}, for which the following holds:  $\vct{J}_l^\intercal\vct{e}_o=\vct{J}_l\vct{e}_o=\vct{e}_o$ (as shown in \cite{Koutras2021}).
\begin{theorem} The origin of the state-space of the  system \eqref{eq:dyn_model_proof}, i.e. $(\vct{e}_p,\vct{e}_o,\vct{e}_v)=(\vct{0},\vct{0},\vct{0})$, is globally asymptotically stable. 
\end{theorem}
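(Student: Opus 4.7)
The plan is to tackle this via a Lyapunov-LaSalle argument with the natural ``kinetic+potential'' energy-like storage function
\[
V \;\triangleq\; \tfrac{1}{2}\vct{e}_v^\intercal \vct{H}_c \vct{e}_v \;+\; \tfrac{1}{2}k_p\,\vct{e}_p^\intercal\vct{e}_p \;+\; \tfrac{1}{2}k_o\,\vct{e}_o^\intercal\vct{e}_o .
\]
Since $\vct{H}_c$ is positive definite and $k_p,k_o>0$, $V$ is positive definite and radially unbounded on $(\vct{e}_p,\vct{e}_o,\vct{e}_v)$-space, so the sublevel sets are compact. First I would verify the two structural identities the proof will lean on: (i) the skew-symmetry property $\dot{\vct{H}}_c = \vct{C}_c + \vct{C}_c^\intercal$ implied by the block structure of $\vct{H}_c$ and $\vct{C}_c$ (the translational block has a zero derivative and zero Coriolis term, while the rotational block satisfies this identity via $\mathcal{I}_c = \vct{R}_c\mathcal{I}\vct{R}_c^\intercal$ and the Euler-like Coriolis definition $\vct{S}(\mathcal{I}_c\vect{\omega}_c)$), and (ii) the orientation-error identity $\vct{J}_l^\intercal \vct{e}_o = \vct{e}_o$, explicitly invoked from \cite{Koutras2021}.

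Next I would differentiate $V$ along trajectories of the closed loop \eqref{eq:dyn_model_proof}-\eqref{eq:dyn_model_proof_2}. Substituting for $\vct{H}_c\dot{\vct{e}}_v$ and using $\tfrac{1}{2}\vct{e}_v^\intercal\dot{\vct{H}}_c\vct{e}_v - \vct{e}_v^\intercal\vct{C}_c\vct{e}_v = 0$ cancels the Coriolis contribution. Splitting $\vct{e}_v = [\vct{e}_{v,p}^\intercal \;\; \vct{e}_{v,o}^\intercal]^\intercal$ and using $\dot{\vct{e}}_p = \vct{e}_{v,p}$ plus $\dot{\vct{e}}_o = \vct{J}_l\vct{e}_{v,o}$ together with $\vct{e}_o^\intercal\vct{J}_l\vct{e}_{v,o} = (\vct{J}_l^\intercal\vct{e}_o)^\intercal\vct{e}_{v,o} = \vct{e}_o^\intercal\vct{e}_{v,o}$, the cross terms $-k_p\vct{e}_{v,p}^\intercal\vct{e}_p - k_o\vct{e}_{v,o}^\intercal\vct{e}_o + k_p\vct{e}_p^\intercal\dot{\vct{e}}_p + k_o\vct{e}_o^\intercal\dot{\vct{e}}_o$ cancel. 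What survives is
\[
\dot V \;=\; -\,\vct{e}_v^\intercal \vct{K}_v \vct{e}_v \;\le\; 0,
\]
so $V$ is non-increasing and all errors remain bounded, establishing global stability of the origin.

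To upgrade this to global asymptotic stability I would invoke LaSalle's invariance principle on the (compact) sublevel sets of $V$. On the locus $\dot V \equiv 0$ we have $\vct{e}_v \equiv \vct{0}$, whence $\dot{\vct{e}}_v \equiv \vct{0}$; plugging this back into \eqref{eq:dyn_model_proof} forces $[k_p\vct{e}_p^\intercal \;\; k_o\vct{e}_o^\intercal]^\intercal = \vct{0}$, i.e. $\vct{e}_p = \vct{0}$ and $\vct{e}_o = \vct{0}$. Thus the largest invariant subset of $\{\dot V = 0\}$ is the origin, and global asymptotic stability follows by radial unboundedness of $V$.

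The main obstacle I anticipate is the orientation-error bookkeeping: one must be careful that the logarithmic map $\log(\vct{R}_c\vct{R}_d^\intercal)$ induces an error kinematics of the form $\dot{\vct{e}}_o = \vct{J}_l(\vct{e}_o)\vct{e}_{v,o}$ with exactly the right $\vct{J}_l$, and then rely on the non-trivial property $\vct{J}_l^\intercal\vct{e}_o = \vct{e}_o$ to make the quadratic cross terms cancel. The translational part is the textbook computed-torque/PD-with-gravity-compensation argument, so the rotational bookkeeping, together with a correct use of the Coriolis skew-symmetry identity for the block-diagonal $\vct{H}_c$, is where the delicate step lies; once those are settled, the LaSalle conclusion is immediate.
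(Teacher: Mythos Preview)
Your proposal is correct and follows essentially the same Lyapunov--LaSalle argument as the paper, using the identical storage function and the two key structural identities (skew-symmetry of $\dot{\vct H}_c-2\vct C_c$ and $\vct J_l^\intercal\vct e_o=\vct e_o$). If anything, your version is more complete: you explicitly carry out the invariant-set analysis showing that $\vct e_v\equiv\vct 0$ forces $\vct e_p=\vct e_o=\vct 0$, whereas the paper simply invokes LaSalle without spelling this out.
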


\begin{proof}
Consider the following candidate Lyapunov function:
\begin{equation}
L = \frac{k_p}{2}\|\vct{e}_p\|^2 +\frac{k_o}{2}\|\vct{e}_o\|^2 + \frac{1}{2}\vct{e}_v^\intercal\vct{H}_c\vct{e}_v.
\end{equation}
By taking its time derivative, we get:
\begin{equation}\label{eq:Lyap_dot}
\dot{L} = k_p\vct{e}_p^\intercal\dot{\vct{e}}_p +k_o\vct{e}_o^\intercal\dot{\vct{e}}_o + \frac{1}{2}\vct{e}_v^\intercal\dot{\vct{H}}_c\vct{e}_v + 
\vct{e}_v^\intercal\vct{H}_c\dot{\vct{e}}_v.
\end{equation}
After substituting $\vct{H}_c\dot{\vct{e}}_v$ from \eqref{eq:dyn_model_proof} to \eqref{eq:Lyap_dot} and utilizing the skew symmetric property, i.e. $\vct{e}_v^\intercal\left(\dot{\vct{H}}_c-2\vct{C}_c\right)\vct{e}_v=0$,  we get:
\begin{equation}\label{eq:Lyap_dot2}
\dot{L} = k_p\vct{e}_p^\intercal\dot{\vct{e}}_p +k_o\vct{e}_o^\intercal\dot{\vct{e}}_o  + \vct{e}_v^\intercal\left(-\vct{K}_v\vct{e}_v - \begin{bmatrix}
k_p \vct{e}_p \\
k_o \vct{e}_o
\end{bmatrix}\right).
\end{equation}
By utilizing \eqref{eq:dyn_model_proof_2} and the property  $\vct{J}_l^\intercal\vct{e}_o=\vct{J}_l\vct{e}_o=\vct{e}_o$, \eqref{eq:Lyap_dot2} becomes:
\begin{equation}\label{eq:Lyap_dot3}
\dot{L} =  -\vct{K}_v\vct{e}_v^\intercal\vct{e}_v,
\end{equation}
which is less or equal to zero for all $\vct{e}_p\neq\vct{0}, \vct{e}_o\neq\vct{0}, \vct{e}_v\neq\vct{0}$. Hence, by invoking the LaSalle theorem we can conclude that the origin is globally asymptotically stable. 
\end{proof}

\bibliographystyle{ieeetr}
\bibliography{mybib}

\end{document}